\definecolor{bgblue}{RGB}{245,243,253}
\definecolor{ttblue}{RGB}{91,194,224}
\DeclareMathOperator{\Var}{Var}
\title{Algorithmic Bias and Data Bias: Understanding the Relation between Distributionally Robust Optimization and Data Curation}
\author{%
  Agnieszka S{\l}owik \\
  Department of Computer Science and Technology\\
  University of Cambridge\\
  Cambridge, UK \\
  \texttt{agnieszka.slowik@cl.cam.ac.uk} \\
  \And
   L\'{e}on Bottou \\
   Facebook AI Research, New York, NY, USA,\\
and New York University, New York, NY, USA
}
\theoremstyle{plain}
\newtheorem{theorem}{Theorem}
\newtheorem{lemma}{Lemma}
\theoremstyle{definition}
\newtheorem{definition}{Definition}
\theoremstyle{remark}
\newtheorem{example}{Example}
\newtheorem*{thexrefthm}{{\bf Theorem \thexref}}
\newenvironment{xreftheorem}[1]{%
\def\thexref{\ref{#1}}\begin{thexrefthm} \em}{%
\end{thexrefthm}}
\newtheorem*{thexrefdef}{{\bf Definition \thexref}}
\def\eg.{\mbox{e.}\mbox{g.}}
\def\R{\ensuremath{\mathbb{R}}}
\def\dotp#1#2{\ensuremath{\left<{#1},{#2}\right>}}
\def\Q{\ensuremath{\mathcal{Q}}}
\def\E{\ensuremath{\mathbb{E}}}
\def\Pmix{{P_{\mathrm{mix}}}}
\def\Pmixeps{{P_{\mathrm{mix}}^{(\varepsilon)}}}
\def\Pmixoot{{P_{\mathrm{mix}}^{(1/t)}}}
\def\Qmix{{\Q_{\mathrm{mix}}}}
\pgfplotsset{compat=1.17}
\def\section{\@startsection{section}{1}{\z@}{-1.6ex \@plus -0.2ex \@minus -0.1ex}{1ex \@plus 0ex \@minus 0.2ex}{\large\bf\raggedright}}
\begin{document}

\maketitle

\begin{abstract}
Machine learning systems based on minimizing average error have been shown to perform inconsistently across notable subsets of the data, which is not exposed by a low average error for the entire dataset. In consequential social and economic applications, where data represent people, this can lead to discrimination of underrepresented gender and ethnic groups. Given the importance of bias mitigation in machine learning, the topic leads to contentious debates on how to ensure fairness in practice (data bias versus algorithmic bias). Distributionally Robust Optimization (DRO) seemingly addresses this problem by minimizing the worst expected risk across subpopulations. We establish theoretical results that clarify the relation between DRO and the optimization of the same loss averaged on an adequately weighted training dataset. The results cover finite and infinite number of training distributions, as well as convex and non-convex loss functions. We show that neither DRO nor curating the training set should be construed as a complete solution for bias mitigation: in the same way that there is no universally robust training set, there is no universal way to setup a DRO problem and ensure a socially acceptable set of results. We then leverage these insights to provide a mininal set of practical recommendations for addressing bias with DRO. Finally, we discuss ramifications of our results in other related applications of DRO, using an example of adversarial robustness. Our results show that there is merit to both the algorithm-focused and the data-focused side of the bias debate, as long as arguments in favor of these positions are precisely qualified and backed by relevant mathematics known today.

\end{abstract}

\section{Introduction}

Machine learning algorithms are increasingly used to support real-world decision-making processes. In that context, optimising for the loss averaged on the overall population can easily
yield models that perform poorly on specific subpopulations, potentially amplifying the injustices that already plague our society \citep{DBLP:journals/corr/DattaTD14, DBLP:journals/bigdata/Chouldechova17,amazon, DBLP:conf/nips/RahmattalabiVFR19, NEURIPS2020_79a3308b,  nyt, propublica}. 

Whether such problems can be addressed by curating the training data leads to contentious debates. For instance, it is difficult and often painful to know whether it is sufficient to ensure that the relevant subpopulations are well represented in the training set, whether the structure of the statistical model must be revisited, or whether the whole system, its goals, and its methods, are fundamentally broken. We argue that there is merit to both the algorithm-focused and the data-focused side of the discussion, as long as arguments in favor of these positions are precisely qualified and backed by mathematics.

Distributionally Robust Optimization (DRO)~\citep{DBLP:books/degruyter/Ben-TalGN09} bridges two perspectives on this problem. On one hand, DRO seems to offer a promising solution because it minimizes the worst loss observed on multiple distributions such as those representing each subpopulation. On the other hand, it can be shown under weak conditions that DRO is closely related to minimizing the average loss on an adequate mixture of those distributions, that is, a training set in which the subpopulations have been adequately weighted. Our contributions are:
\begin{enumerate}
    \item We establish results that clarify the relation between DRO and the optimization of the same loss averaged on an adequately weighted training set (see Section~\ref{sec:mixture} summarizing the theoretical Appendix). 
    \item We also show that neither DRO nor curating the training set should be construed as a complete solution of our initial problem. In particular, each DRO formulation implicitly makes calibration assumptions on the losses measured on various subpopulations. Making them explicit brings back the contentious issues (see Section~\ref{sec:calibration}.)
    \item We leverage this mathematical understanding to provide a minimal set of practical recommendations to approach such difficult problems. Although what is acceptable or not in real-world applications is not a part of the mathematical problem, elementary mathematics tells us that we cannot obtain an acceptable result with DRO if we are unable to obtain an acceptable result with systems specifically optimized for each subpopulation. 
    \item Finally, we discuss what these insights tell us about the caveats of using DRO to tame adversarial examples \citep{szegedy-2014, kuhn2019wasserstein,augustin2020adversarial, derman2020distributional}.
\end{enumerate}

\begin{figure} 
\begin{tabular}{cc}
\includegraphics[width=.49\linewidth]{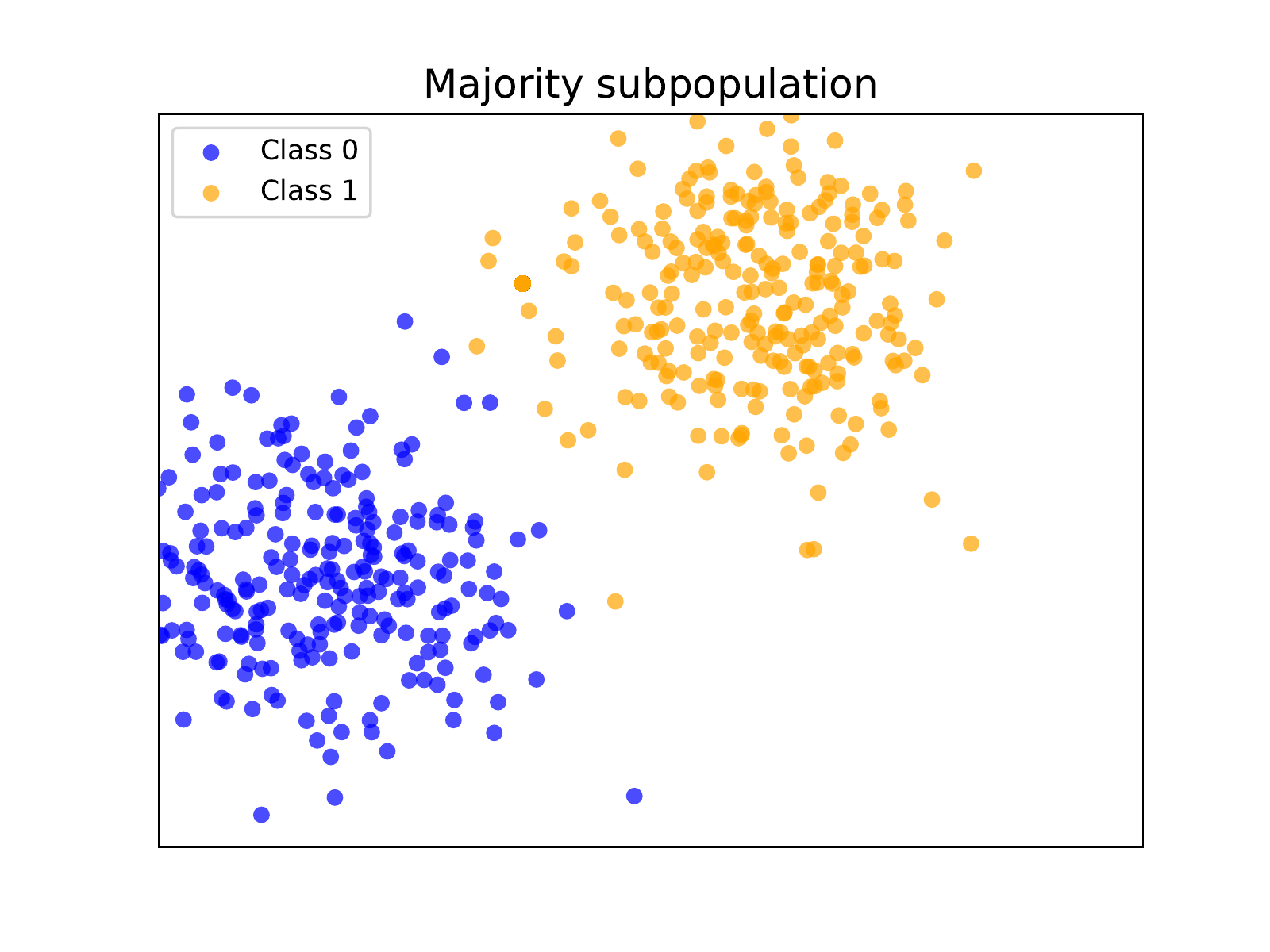} &
\includegraphics[width=.49\linewidth]{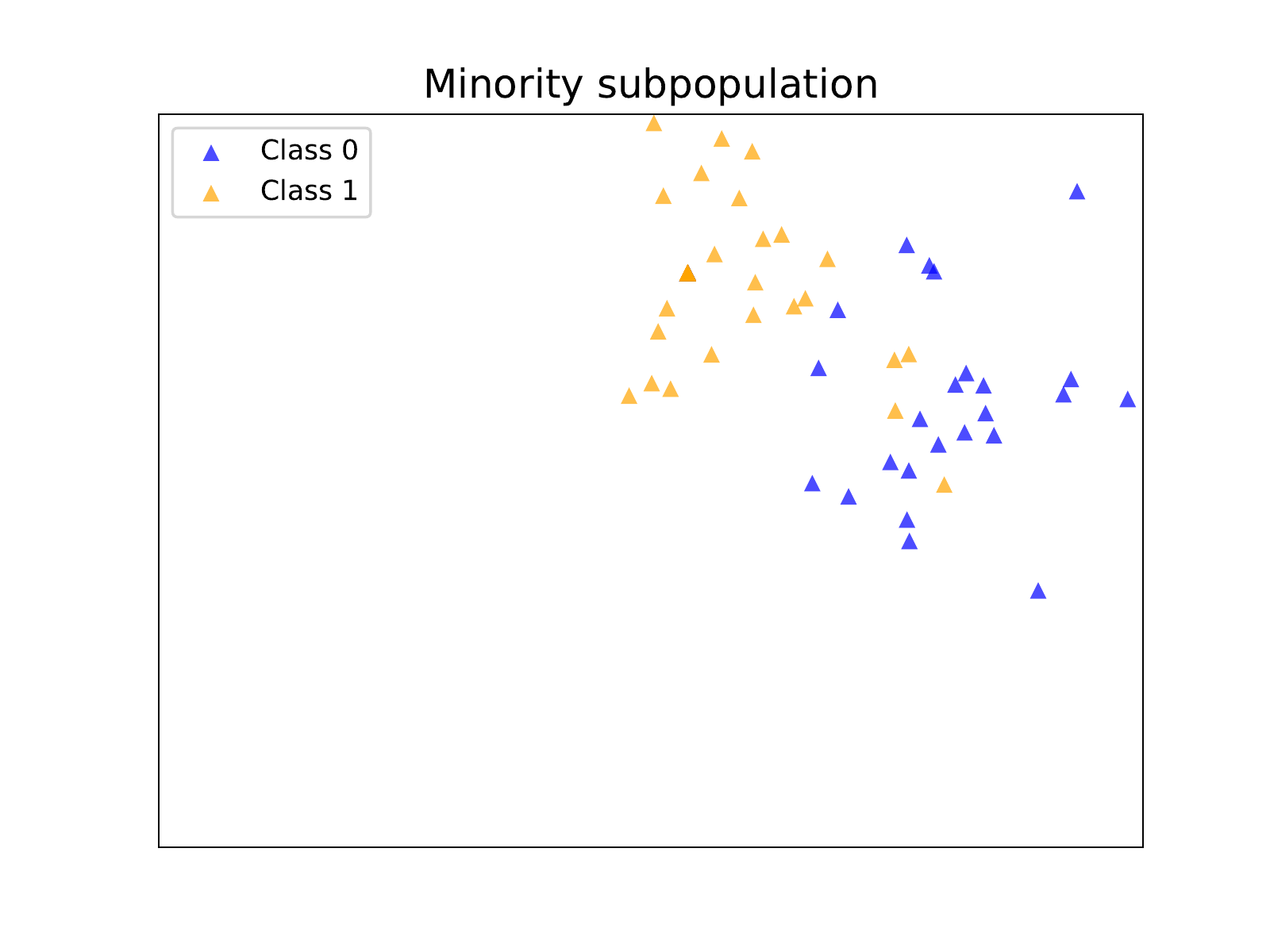} \\[-2.5ex]
\includegraphics[width=.49\linewidth]{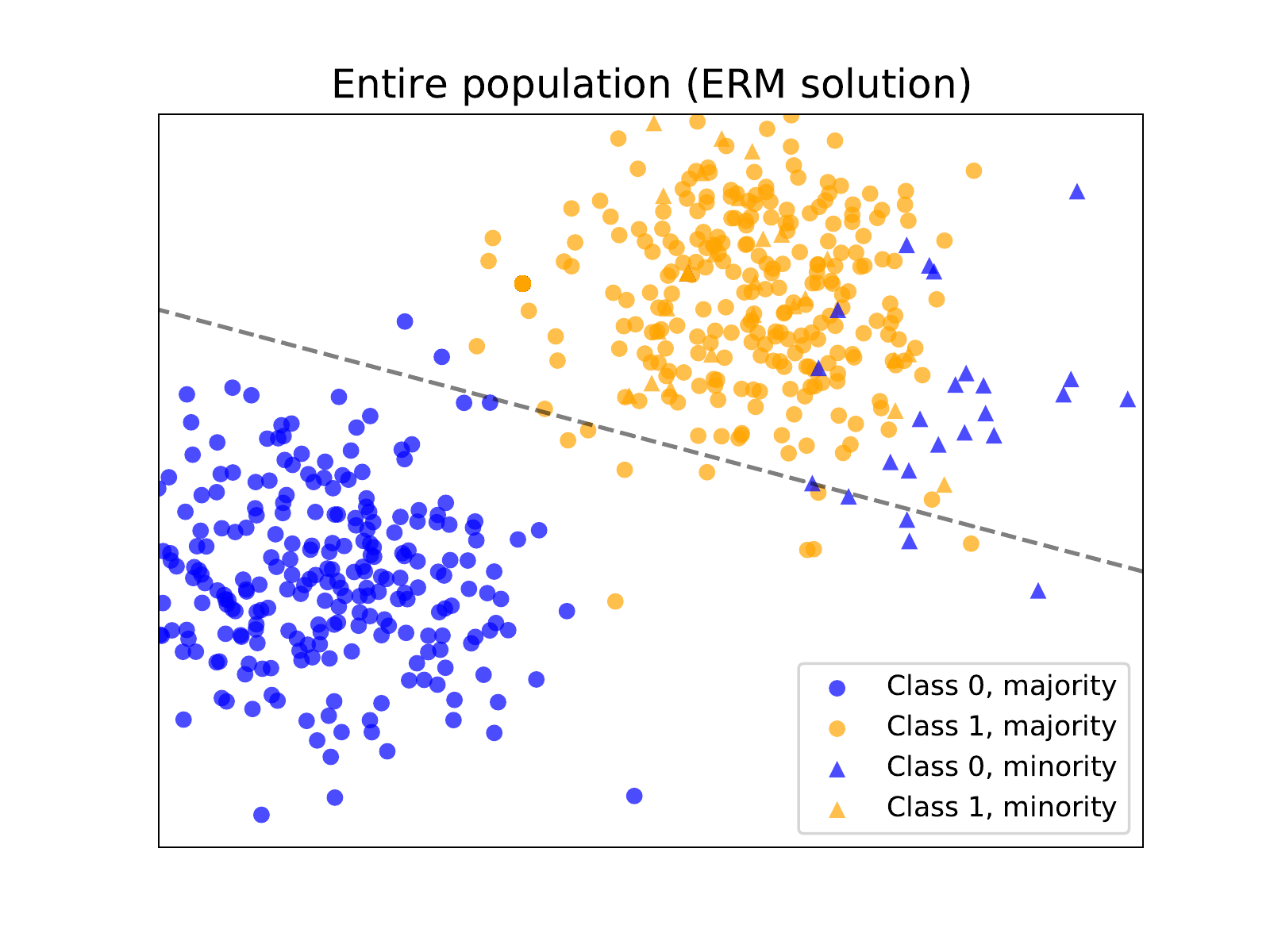} &
\includegraphics[width=.49\linewidth]{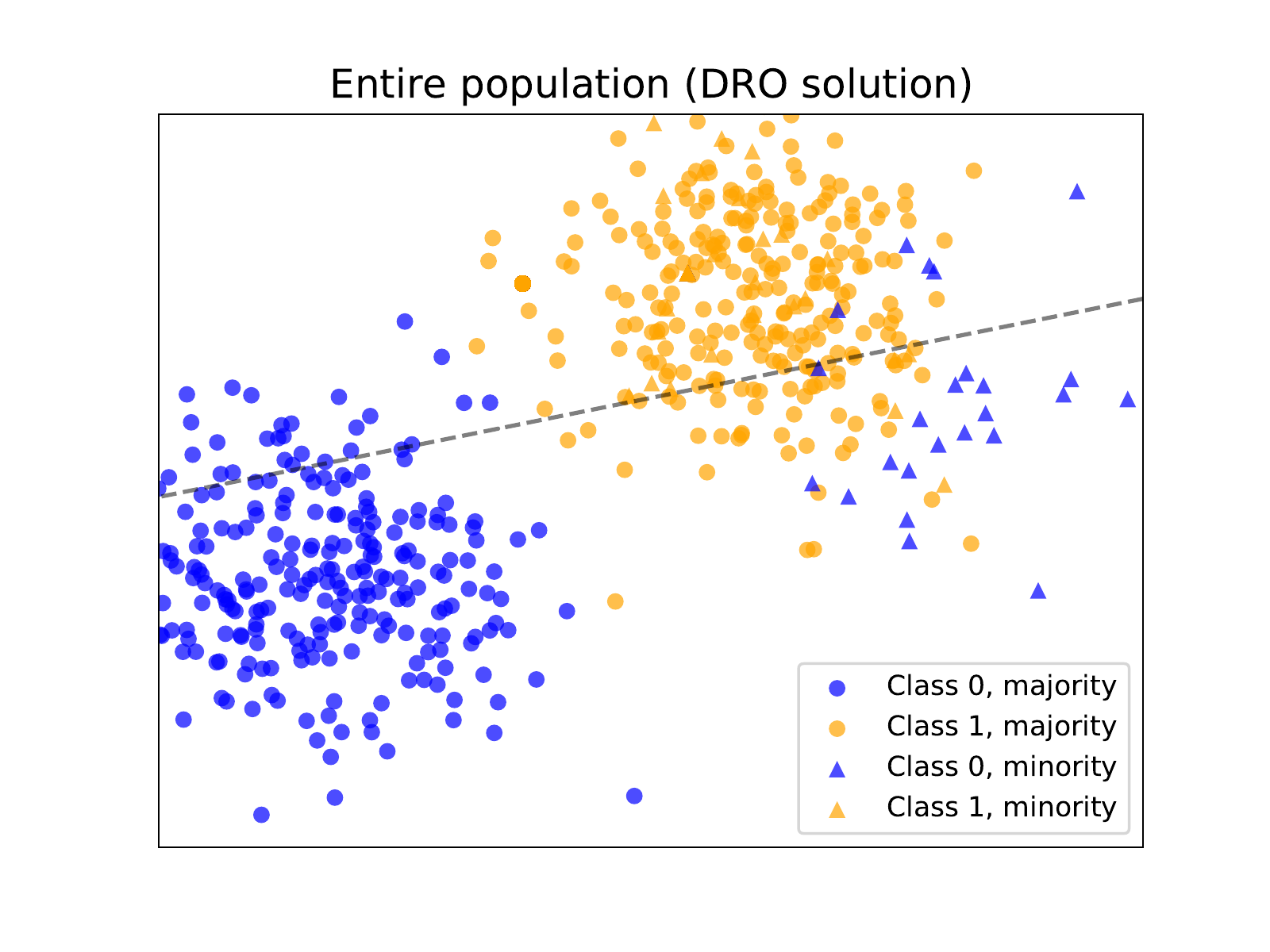} \\[-2.7ex]
\textsf{\small Majority: 97\% correct, Minority: 58\% correct} &
\textsf{\small Majority: 83\% correct, Minority: 82\% correct}\\  
\end{tabular}

\caption{Illustration of the failings of Expected Risk Minimization (ERM) and i.i.d assumption in a linear binary classification problem, where the dataset can be partitioned to a \emph{majority} and a \emph{minority} subpopulation. ERM misclassifies a significant proportion of the minority subpopulation. DRO equalizes the performance across subpopulations. The DRO solution was obtained using the Lagrangian algorithm (Algorithm~\ref{alg:lagrangianalgomain}), both solutions use the same linear SVM model.}
\label{fig:ermdro}
\vspace{-0.5cm}
\end{figure}

\section{DRO versus data curation}
\label{sec:mixture}
Traditionally, training a model in machine learning seeks parameters, say, the weights $w$ of a neural networks, that minimize a risk $C_P(w)$ that is the expectation of a loss function with respect to a \emph{single distribution of training examples}.

Alas, even when the training distribution is representative of the actual testing conditions, the trained system might perform very poorly on selected subsets of examples. For instance, Figure~\ref{fig:ermdro} describes a training problem where a majority population and a minority population have different classification boundaries. Minimizing the expected loss over the full dataset (bottom left plot) yields a system whose performance is skewed towards the majority population at the expense of the performance in the minority population. In real life, this can be a source of major injustice. Algorithms that optimize for minimum average error yield models that perform poorly on subpopulations that are already at risk due to pre-existing biases. This is most pronounced when ERM (based on minimising the average prediction error) produces solutions that privilege the majority populations over the minorities. This was shown to be consequential in scenarios such as court verdicts, loan applications, hiring and healthcare interventions~\citep{DBLP:journals/corr/DattaTD14, DBLP:journals/bigdata/Chouldechova17,amazon, DBLP:conf/nips/RahmattalabiVFR19, NEURIPS2020_79a3308b, nyt, propublica}. For example, hiring systems and ad-targeting algorithms based on minimising average error were found to discriminate against female users by more frequently proposing executive and technical jobs to men~\citep{DBLP:journals/corr/DattaTD14, amazon}.

Distributionally Robust Optimization (DRO) \emph{seemingly addresses this problem} by considering instead a collection $\Q$ of `training distributions' and minimizing the expected risk observed on the \emph{most adverse} distribution
\begin{equation}
\label{eqmain:dro}
        \min_w ~ \max_{P\in\Q}  C_P(w) ~.
\end{equation}
For instance, we can substantially reduce the performance differences reported in Figure~\ref{fig:ermdro} (bottom left plot). Using DRO with a set $\Q$ of two distributions, representing the majority and minority populations, leads to the decision boundary illustrated in Figure~\ref{fig:ermdro} (bottom right plot). This classifier does not reproduce the pre-existing imbalance between the majority and the minority subpopulation. Instead, the performance is equalized, and the average accuracy increases from $77.5\%$ to $82.5\%$.

As discussed more precisely in Section~\ref{sec:calibration}, the approach of using the basic definition of DRO \eqref{eqmain:dro} without additional information on the subpopulations is insufficient in practical applications. However, we argue that DRO remains an interesting building block because it provides a bridge between two common approaches to this problem, namely, ($i$) ensuring that the trained system has consistent performance across subpopulations, and ($ii$) curating the training set by remixing the populations until obtaining a more palatable result. 

One of the contributions of our work is an ensemble of mathematical results that clarify the relation between finding a local minimum of the DRO problem \eqref{eqmain:dro} and \eqref{eqmain:drocal} on one hand, and minimizing the usual expected risk with respect to a single, well crafted, training distribution. We hope these results will be useful for both the data-focused and the algorithm-focused sides of the bias debate in machine learning community.

For convex cost functions, these results are well known, because one can reformulate the DRO problem as a constrained optimization problem by introducing a slack variable~$L$,
    \[
       \min_{w,L} L ~~~\text{s.t.}~~~  
         \forall P\in\Q ~  C_P(w) \leq L~,
    \]
and relying on standard convex duality results~\citep{bertsekas-2009} (see Section A.4. in the Appendix).

The point of our theoretical contribution is that similar results \emph{hold for the local minima of the nonconvex costs} typical of modern deep learning systems, and also \emph{hold when the family $\Q$ is infinite}. We now summarize the main results (the elaboration and proofs of these propositions are in the Appendix).

Let $\ell(z,w)$ be the loss of a machine learning model where $w\in\R^d$ represent the parameters of the model and $z\in\R^n$ belongs to the space of examples. For instance, in least square regression, the examples $z$ are pairs $(x,y)$ and the loss is $\ell(z,w)=\|y-f_w(x)\|^2$. When the collection $\Q$ is finite, under weak regularity assumptions, a DRO local minimum is always a stationary point of the expectation of the loss with respect to a suitable mixture of the DRO training distributions. Theorem~\ref{th:one} states this fact for a finite collection $\Q$ of distributions: 
\begin{theorem}[Finite case]
\label{th:one}
Let $\Q=\{P_1,\dots,P_K\}$ be a finite set of probability distributions on~$\R^n$ and let $w^*$ be a local minimum of the DRO problem \eqref{eqmain:dro} or the calibrated DRO problem \eqref{eqmain:drocal}. Let the costs $C_P(w)=\E_{z\sim P}[\ell(z,w)]$ be differentiable in $w^*$ for all $P\in\Q$. Then there exists a mixture distribution $\Pmix=\sum_{k}\lambda_k P_k$ such that $\nabla{C}_{\Pmix}(w^*)=0$.
\end{theorem}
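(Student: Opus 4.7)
The plan is to derive a first-order optimality condition for the piecewise-smooth function $F(w) := \max_{k=1,\dots,K} C_{P_k}(w)$ at the local minimum $w^*$, and then extract the mixture weights from a convex-hull argument applied to the active gradients. Let $\mathcal{A} := \{k : C_{P_k}(w^*) = F(w^*)\}$ be the active index set. For inactive indices $k\notin\mathcal{A}$ there is a strict gap $C_{P_k}(w^*) < F(w^*)$, so by continuity these indices remain inactive on a whole neighbourhood of $w^*$. Combined with the assumed differentiability of each $C_{P_k}$ at $w^*$, this yields the finite-max version of Danskin's identity:
\[
F'(w^*; v) \;=\; \lim_{t\downarrow 0} \frac{F(w^* + tv) - F(w^*)}{t} \;=\; \max_{k\in\mathcal{A}}\; \nabla C_{P_k}(w^*)^{\top} v
\]
for every direction $v$.

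Local minimality of $w^*$ forces $F'(w^*;v) \geq 0$ for all $v$. Equivalently, writing $g_k := \nabla C_{P_k}(w^*)$, no single direction $v$ can satisfy $g_k^{\top}v < 0$ simultaneously for every $k\in\mathcal{A}$. This is the dual characterization of $0$ belonging to the convex hull $\mathrm{conv}\{g_k : k\in\mathcal{A}\}$: if $0$ were outside, the strict separating hyperplane theorem (applicable because the convex hull of a finite set of vectors is compact) would produce a direction violating the first-order condition just derived. Hence there exist nonnegative weights $\lambda_1,\dots,\lambda_K$ with $\sum_k \lambda_k = 1$, supported on $\mathcal{A}$, satisfying $\sum_k \lambda_k g_k = 0$.

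Setting $\Pmix := \sum_k \lambda_k P_k$ yields a genuine probability distribution in the convex hull of $\Q$, and by linearity of expectation,
\[
\nabla C_{\Pmix}(w^*) \;=\; \nabla \sum_k \lambda_k\, C_{P_k}(w^*) \;=\; \sum_k \lambda_k\, g_k \;=\; 0,
\]
which is the desired conclusion. For the calibrated variant \eqref{eqmain:drocal}, the same argument transfers essentially verbatim: calibration replaces each $C_{P_k}$ by a shifted or rescaled version that is still differentiable at $w^*$ and still enters the problem through a finite maximum, so the same Danskin-plus-convex-hull reasoning delivers mixture weights (possibly different ones) with the same stationarity conclusion.

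I expect the main obstacle to be the Danskin step, which requires carefully handling inactive indices and justifying that the one-sided derivative of a maximum of smooth functions equals the maximum of the active gradients along the direction. The convex-hull extraction is then standard once the closedness of $\mathrm{conv}\{g_k : k\in\mathcal{A}\}$ is observed, which is automatic in the finite case but is precisely the point that would demand additional hypotheses (compactness of $\Q$, a measurable selection of multipliers, and so on) in the infinite-$\Q$ extension handled separately in the appendix.
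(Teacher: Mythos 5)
Your proposal is correct and follows essentially the same route as the paper: both reduce the statement to showing that the origin lies in the convex hull of the gradients $\nabla C_{P_k}(w^*)$ and invoke a strict separating hyperplane (the paper's Lemma~1, a Farkas-type result) to contradict local minimality otherwise. The only real difference is presentational: you pass through Danskin's directional-derivative formula and restrict to the active index set, which mildly sharpens the conclusion by supporting the mixture on the active distributions, whereas the paper argues directly that a separating direction would decrease all $K$ costs simultaneously and so takes the convex hull of all the gradients.
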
 

When the collection $\Q$ is infinite (possibly uncountably infinite) but satisfies a \emph{tightness} condition (see Definition 1 in the Appendix), we can still show that a DRO local minimum is a stationary point for a well crafted training distribution. However, this training distribution is not necessarily a finite or countable mixture of the distributions found in $\Q$ but is always found in the weak closure of the convex hull of $\Q$. \emph{Adversarial robustness} is an example of applying DRO on an infinite family of distributions (see a discussion of the implications of Theorem~\ref{th:two} in Section~\ref{sec:adversarial}). 
    
\begin{theorem}[Infinite case]
\label{th:two}
Let $\Q$ be a tight family of probability distributions on $\R^n$. Let $w^*$ be a local minimum of problem~\eqref{eqmain:drocal}. Let $\Qmix$ be the weak convergence closure of the convex hull of $\Q$. Let there be a bounded continuous function $h(z,w)$ defined on a neighborhood $\mathcal{V}$ of $w^*$ such that $\nabla{C}_P(w)=\E_{z\sim{P}}[h(z,w)]$ for all $P\in\Qmix$ and such that $\|h(z,w)-h(z,w')\|\leq{M}\|w-w'\|$ for almost all $z\in\R^n$. Then $\Qmix$ contains a distribution $\Pmix$ such that $\nabla_w C_\Pmix(w^*)=0$.
\end{theorem}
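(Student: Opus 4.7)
The plan is to argue by contradiction using strict hyperplane separation. I would assume that no $\Pmix \in \Qmix$ satisfies $\nabla C_{\Pmix}(w^*) = 0$ and study the set of attainable gradients
\[
    G \;:=\; \bigl\{\nabla C_P(w^*) : P \in \Qmix\bigr\}
    \;=\; \bigl\{\E_{z \sim P}[h(z,w^*)] : P \in \Qmix\bigr\} \;\subset\; \R^d.
\]

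The first step is to show that $G$ is a nonempty compact convex subset of $\R^d$. Convexity follows because the map $P \mapsto \E_P[h(\cdot,w^*)]$ is affine in $P$ and $\Qmix$ is convex, the weak closure of a convex set of probability measures being convex via affinity of integration against bounded continuous test functions. Compactness is the key technical point: tightness of $\Q$ passes to its convex hull and then to its weak closure, so $\Qmix$ is tight, hence weakly sequentially compact by Prokhorov's theorem; composition with $P \mapsto \E_P[h(\cdot,w^*)]$, which is weakly continuous because $h(\cdot,w^*)$ is bounded and continuous by assumption, then exhibits $G$ as the image of a weakly compact set under a continuous map, hence compact in $\R^d$.

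Since $0 \notin G$ by the contradiction hypothesis, strict separation provides a direction $v \in \R^d$ and a constant $c > 0$ such that $\langle v, \nabla C_P(w^*) \rangle \geq c$ for every $P \in \Qmix$, and in particular for every $P \in \Q$. I would then promote this uniform first-order estimate into a uniform second-order Taylor bound using the Lipschitz hypothesis on $h$: for every $P \in \Q$ and every $t > 0$ small enough that $w^* - tv \in \mathcal{V}$,
\[
    C_P(w^* - tv) - C_P(w^*)
    \;=\; -\int_0^t \bigl\langle v,\, \E_P[h(z, w^* - sv)] \bigr\rangle\, ds
    \;\leq\; -\,c\,t + \tfrac{1}{2} M \|v\|^2\, t^2,
\]
with constants $c$ and $M$ that do not depend on $P$. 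This uniformity is precisely why the bounded Lipschitz assumption on $h$ is needed: it controls the second-order remainder simultaneously across the possibly uncountable family $\Q$.

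Taking the supremum over $P \in \Q$ on both sides yields $F(w^* - tv) \leq F(w^*) - ct + O(t^2)$, where $F(w) := \sup_{P \in \Q} C_P(w)$. Hence $F(w^* - tv) < F(w^*)$ for all sufficiently small $t > 0$, which contradicts local minimality of $w^*$. The main obstacle is the first step: establishing that $G$ is compact, not just bounded. Without tightness, one could have $0$ in the closure of $G$ without being attained, in which case the separation argument collapses and no witnessing $\Pmix$ need exist; it is exactly the tightness of $\Q$ together with boundedness and continuity of $h(\cdot,w^*)$ that simultaneously delivers weak compactness of $\Qmix$ and continuity of the gradient map, and therefore the compactness of $G$ needed to make the strict separation step valid.
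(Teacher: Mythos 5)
Your proof is correct, and it uses the same essential ingredients as the paper's argument --- strict hyperplane separation, tightness plus Prokhorov's theorem, weak continuity of $P\mapsto\E_{z\sim P}[h(z,w^*)]$ from boundedness and continuity of $h(\cdot,w^*)$, and the Lipschitz property of $h$ in $w$ to get a descent estimate uniform over the family --- but you assemble them in a different order. The paper applies the separation lemma to the \emph{closure} of the convex hull of the gradients $\{\nabla C_P(w^*):P\in\Q\}$, concludes that the origin lies in that closure, and only then invokes tightness and Prokhorov to upgrade a sequence of finite mixtures with gradients tending to zero into an actual limit distribution $\Pmix\in\Qmix$ with gradient exactly zero. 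You instead front-load the compactness: tightness of $\Q$ passes to its convex hull and (by the portmanteau theorem, since the compact sets are closed) to $\Qmix$, so $\Qmix$ is weakly compact, and the attainable-gradient set $G$ is the continuous image of a compact convex set, hence compact and convex; strict separation of $G$ from the origin then combines with the uniform second-order descent bound to contradict local minimality directly, and attainment of the zero gradient comes for free from closedness of $G$, with no approximating sequence needed. The two routes are logically equivalent, but yours packages the role of tightness more transparently (it is exactly what makes $G$ closed rather than merely having $0$ in its closure, which is the point of the paper's footnote), at the cost of having to verify convexity and closedness of $\Qmix$ explicitly. One small point: the theorem concerns the calibrated objective $\max_{P}(C_P(w)-r_P)$, whereas your final step takes $F(w)=\sup_P C_P(w)$; this is harmless since the calibration constants are independent of $w$, so each $C_P-r_P$ decreases by the same uniform amount $ct-\tfrac12 M\|v\|^2t^2$ and the supremum drops accordingly, but it is worth saying so explicitly.
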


Conversely, we consider a local minimum of the expectation of the loss with respect to an arbitrary mixture of distributions from $\Q$. Such a local minimum always is a local minimum of a \emph{calibrated DRO} problem where one introduces suitable calibration constants $r_P$ that control how we compare the costs for different distributions:
\begin{equation}
\label{eqmain:drocal}
        \min_w ~ \max_{P\in\Q} ~ \left( C_P(w) - r_P \right)~.
\end{equation}
    We shall see in Section~\ref{sec:calibration} that such calibration coefficients are in fact needed to express the subtleties of the original problem of learning from multiple subpopulations with DRO.
   
\begin{theorem}[Converse]
\label{th:converse}
Let $\Pmix=\sum_k\lambda_kP_k$ be an
arbitrary mixture of distributions $P_k\in\Q$. If $w^*$ is a local minimum of $C_\Pmix$, then $w^*$ is a local minimum of the calibrated DRO problem~\eqref{eqmain:drocal} with calibration coefficients $r_P=C_P(w^*)$.
\end{theorem}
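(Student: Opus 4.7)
The plan is to exploit the particular choice $r_P=C_P(w^*)$. With this calibration, each individual term in the max vanishes at $w^*$, namely $C_P(w^*)-r_P=0$, so the calibrated DRO objective at $w^*$ is exactly zero. Showing $w^*$ is a local minimum therefore reduces to proving that in some neighborhood $U$ of $w^*$ one has
\[
\max_{P\in\Q}\,\bigl(C_P(w)-C_P(w^*)\bigr)\;\geq\;0.
\]

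To establish this, I would invoke the assumption that $w^*$ is a local minimum of $C_\Pmix$. Pick a neighborhood $U$ of $w^*$ on which $C_\Pmix(w)\geq C_\Pmix(w^*)$. Rewriting this using linearity of expectation gives
\[
\sum_{k}\lambda_k\,\bigl(C_{P_k}(w)-C_{P_k}(w^*)\bigr)\;\geq\;0 \qquad\text{for all }w\in U.
\]
Since the mixture weights $\lambda_k$ are nonnegative and sum to one, the left-hand side is a convex combination of real numbers; if every summand were strictly negative, the combination could not be nonnegative. Hence at least one index $k$ satisfies $C_{P_k}(w)-C_{P_k}(w^*)\geq 0$. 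Because each $P_k\in\Q$, we obtain
\[
\max_{P\in\Q}\bigl(C_P(w)-C_P(w^*)\bigr)\;\geq\;\max_{k}\bigl(C_{P_k}(w)-C_{P_k}(w^*)\bigr)\;\geq\;0,
\]
which is precisely the calibrated DRO value at $w$, matching the value at $w^*$.

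There is no substantive obstacle here: the argument is essentially a one-line pigeonhole observation about convex combinations, combined with the elementary identity $C_P(w^*)-r_P=0$ enforced by the choice of $r_P$. The only thing to be careful about is that the same neighborhood $U$ obtained from the local-minimum property of $C_\Pmix$ works for the calibrated DRO problem, which is immediate since no additional regularity (e.g.\ differentiability) of the individual $C_P$ is needed for the inequality above. Thus the theorem follows directly without any appeal to duality, tightness, or the machinery used in Theorems~\ref{th:one} and~\ref{th:two}.
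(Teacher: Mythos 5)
Your proof is correct and is essentially the paper's argument in contrapositive form: the paper assumes $w^*$ is not a calibrated-DRO local minimum and deduces that every $C_P$, hence the mixture, strictly decreases nearby, while you argue directly that the mixture not decreasing forces (by your convex-combination pigeonhole) at least one $C_{P_k}$ not to decrease, so the max stays $\geq 0$. The two are the same one-line observation, so no further comparison is needed.
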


Note that there is a discrepancy between these two theorems. Theorem~\ref{th:converse} says that a local minimum of an expected risk mixture is a DRO local minimum, but Theorem~\ref{th:one} only says that a DRO local minimum is a stationary point (that is, a point with null derivative) of an expected risk mixture. 

This distinction is moot when $\ell(z,w)$ are convex in $w$ because all stationary points are not only local minima, but also global minimum. When this is the case, Theorem~\ref{th:one} and Theorem~\ref{th:converse} then describe the same equivalence as the standard convex duality results. In contrast, when the cost functions are nonconvex, the  stationary point described in Theorem~\ref{th:one} need not even be a local minimum of the expected cost mixture. For instance, if the DRO local minimum is achieved in a region where all $C_P(w)$ have a negative curvature, then any mixture of these costs also has a negative curvature, and, as a result, the stationary point can only be a local maximum. Figure~\ref{fig:counterexamplemain} shows how such a situation can arise in theory. 

However, the learning algorithms that are typically used to train overparametrized deep learning models empirically follow trajectories where the Hessian is very flat apart from a few positive eigenvalues~\citep{sagun-2018}. Weak negative curvature directions always exist ---even when the algorithm stops making progress--- but are very weak.  In order to understand whether situations like Figure~\ref{fig:counterexamplemain} happen in practice, it makes sense to now take a closer look at the algorithms commonly used to implement DRO.

\begin{figure}[b]
    \centering
    \vspace{4ex}
    \begin{tikzpicture}
        \begin{axis}[width=8cm,height=5cm,
          axis lines=middle, 
          xmin = -3, xmax = 3, ymin = -0.6, ymax = 1.6, 
          xlabel={$w$}, ticks=none]
            \addplot[thick,blue,domain=-3:3,dashed] {
                tanh(1 + x) + (x^2 / 20)};
            \addplot[thick,green,domain=-3:3,dashed] {
                tanh(1 - x) + (x^2 / 20)};
            \addplot[thick,brown,domain=-3:3] {
                0.05 + max(tanh(1 - x) + (x^2 / 20),
                           tanh(1 + x) + (x^2 / 20) ) };  
            \addplot[thick,red,domain=-3:3] {
                (tanh(1 + x) + tanh(1 - x)) / 2 + (x^2 / 20) };  
            \node[blue] at (axis cs:-1,-0.5) {$C_1$};
            \node[green] at (axis cs:1,-0.5) {$C_2$};
            \node[brown] at (axis cs:1,1.4){
                $\max(C_1,C_2)$};
            \node[red] at (axis cs:-2,0.6){
                $\tfrac12{C_1}+\tfrac12{C_2}$};
        \end{axis}
    \end{tikzpicture}
    \caption{ \label{fig:counterexamplemain}
    Consider the two real functions $C_1(w)=\tanh(1+w)+\epsilon w^2$ and $C_2(w)=\tanh(1+w)+\epsilon w^2$.
    The minimum $w^*=0$ of $\max\{C_1(w),C_2(w)\}$ is a stationary point of the mixture cost $C_{\mathrm{mix}}(w) = \tfrac12C_1(w)+\tfrac12C_2(2)$. However, because
    it is achieved in negative curvature regions of $C_1$ and $C_2$, this stationary point is not a local minimum but a local maximum of the mixture cost. In practice, overparametrized deep learning models are unlikely to meet such a situation because the Hessian along the learning trajectories tend to be essentially flat apart from a few positive eigenvalues~\citep{sagun-2018}.}
\end{figure}
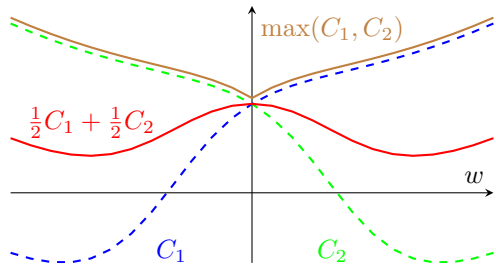

In the convex case, it is known that increasing the weight of a distribution in the mixture is equivalent to reducing the corresponding calibration coefficient. This observation leads to a plethora of saddle-point seeking algorithms such as Uzawa iterations~\citep{uzawa-58}. See Algorithm~\ref{alg:lagrangianalgomain} for a representative example. Because such algorithms are reliable and can be made efficient, many authors advocate using similar strategies for nonconvex deep learning systems (e.g., \citep{sagawa-2020}). 
Section A.4. in the Appendix shows how
our theoretical results offer support for this practice. It also shows that such an algorithm fails to find DRO minima when the associated stationary point of the expected cost mixture is not itself a local minimum. For practical purposes, this means that \emph{there is no substantial difference between using such an efficient DRO algorithm and minimizing a well-crafted expected risk mixture.} 

\IncMargin{1em} 
\begin{algorithm}[tb]
\label{alg:lagrangianalgomain}
\DontPrintSemicolon
\LinesNumbered
\SetKw{KwInput}{Input:}
\SetKw{KwOutput}{Output:}
\SetKwFunction{Descend}{Descend}
\SetKwFunction{Cost}{Cost}
\SetKwFunction{Acceptable}{Acceptable}
\SetKwFunction{ArgMax}{ArgMax}
\KwInput{Equally sized training sets $D_k$ for $k=1\dots K$}\;
\KwInput{Calibration coefficients $r_k$. 
         Initial weights $w_0$.}\;
\KwInput{Temperature $\beta$. 
         Stopping threshold $\epsilon$.}\;
\KwOutput{A sequence of weights $w_t$.}\;
\BlankLine
$t \leftarrow K$\;
$\lambda_k \leftarrow 1/t ~~\forall k$\;
\Repeat {$\max_k|\lambda_k-\delta_k| < t\epsilon$}{
    $w_{t+1} \leftarrow 
        \Descend\big(w_t, \{ D_1\star\lambda_1 \dots D_K\star\lambda_K \} \big)$ \; 
    $c_k \leftarrow 
        \Cost\big(w_{t+1}, \{ D_k \}\big)~~\forall k$ \;  
    $\delta_k \leftarrow 
        \frac{1}{Z} \exp(\beta (c_k-r_k)) ~~ \forall k$ 
        \hfill\emph{--- with $Z$ such that $\sum_k\delta_k=1$} \;
    $\lambda_k \leftarrow \frac{1}{t+1}
        (t\lambda_k + \delta_k)  ~~ \forall k$ \;
    $t \leftarrow t+1$\;
}
\Return $w_t$
\caption{\label{alg:descentmain}
  A typical Lagrangian DRO algorithm. }
\end{algorithm}

\section{Calibration problems}
\label{sec:calibration}

As promised earlier, we now return to the statement of the simple DRO problem~\eqref{eqmain:dro} and discuss how it fails to properly account for many subtleties of the bias fighting problem.

One set of issues is described in the algorithmic bias literature~\citep{DBLP:journals/corr/LouizosSLWZ15,DBLP:journals/corr/BeutelCZC17, DBLP:journals/corr/abs-1801-07593, DBLP:conf/icml/HashimotoSNL18, DBLP:conf/aies/AminiSSBR19}. \emph{Representation disparity} refers to the pheonomenon of achieving a high overall accuracy but low minority accuracy. For instance, ubiquitous speech recognition systems, such as voice assistants, struggle with accents and dialects~\citep{DBLP:journals/taslp/BehravanHSKL16,DBLP:conf/icassp/YangARTRH18, DBLP:journals/speech/NajafianR20}. A minority user becomes discouraged by the poor performance of such system, which leads to \emph{disparity amplification} over time due to the increasing gap between quantity of data provided by active users (majority groups, favored by the system from the beginning) and groups that experienced poor performance due to the initial representation disparity. 
This shows that it is sometimes justified to augment the DRO statement with means to favor certain subpopulations in order to account for representation disparity or disparity amplification. A related development is the method proposed by Sagawa et al. \citep{sagawa-2020}: in order to account for the potentially small size of the training data for some distributions $P$, they augment the cost $C_P(w)$ with a penalty that decreases when the number of training examples for this distribution increases.

There might also be instances where it is justifiable to account for the difference in difficulty across distributions. For instance, it might be known that one of the training distribution represents examples collected with a deficient method, such as, bad cameras, bad conditions, etc.  Because the task is more difficult due to data limitations, the cost $C_P(w)$ for such a distribution will systematically be higher than for other distributions. The simple DRO formulation~\eqref{eqmain:dro} then amounts to optimizing only for this distribution. As a consequence, small gains for the deficient distribution will be obtained at the expense of a massive performance degradation for all other distributions, essentially making it as bad as the performance for the deficient distribution.  It might then be necessary to reduce the weight of this distribution in order to prevent it from dominating the DRO problem. In other words, the ideal solution to such a problem is to address the deficiencies of the data collection method.

Both issues (the need to account for pre-existing difficulties related to modelling certain subpopulations and the equalizing effect of DRO) can be addressed by augmenting the DRO problem with calibration constants that make such adjustments explicit. This leads to the calibrated DRO problem that we have already introduced when discussing Theorem~\ref{th:converse} (Calibrated DRO~\ref{eqmain:drocal}).
Specifying a set of calibration constants amounts to describing what we consider to be an \emph{acceptable} outcome for the original bias fighting problem. What is acceptable or not is obviously problem dependent and can be the object of difficult controversies. Making the calibration constants explicit separates the mathematical optimization statement from the difficult task of deciding what results are acceptable in a real-world problem.

A consequence of the mathematical theory is the practical duality between calibration constants $r_P$ and mixture coefficients $\lambda_{P_k}$. Theorem~\ref{th:one} says that a DRO local minimum for a particular choice of calibration constants is a stationary point of the expected loss for a particular mixture of the original distributions. Conversely, Theorem~\ref{th:converse} says that a local minimum of any particular expected risk mixture is also a DRO local minimum for a particular set of calibration constants. 

The calibration constants $r_P$ might in fact be a better way than mixture coefficients $\lambda_{P_k}$ to specify which performance discrepancies are considered acceptable across subpopulations because there are useful reference points for choosing them. The first reference point is to use equal calibration constants. DRO then optimizes the performance of the most adverse subpopulation at the cost of potentially degrading the performance of the system for the remaining subpopulations. See Figure~\ref{fig:ermdro} for an example. Another approach is to use the calibration constants $r_P^*$ that represent the best performance we can reach with our machine learning model on each distribution $P$ in isolation.
\[
    r_P^* = \min_w C_P(w)  
\]
Solving the DRO problem for these calibration constants amounts to constructing a single machine learning system that performs almost as well on each distribution $P$ as a dedicated machine learning system specifically trained for distribution $P$ alone. In the following section, we elaborate on this method of setting calibration constants.

Note that based on Theorems~\ref{th:one} and~\ref{th:converse}, regardless of the chosen calibration constants, no DRO solution can achieve a performance better than $r_P^*$ on any distribution $P$. If this were the case, it would mean that $r_P^*$ was not correctly estimated, and the new performance would become the corrected $r_P^*$.  This simple observation forms the basis for the practical recommendations discussed in the next section.

\section{A minimal set of practical recommendations}
\label{sec:recommendations}

In this section, we provide a minimal set of practical recommendations to machine learning engineers who face the difficult task of constructing and deploying bias-sensitive machine learning systems. We do not pretend that these recommendations are sufficient to address the bias problem, but merely represent intuitively sensible steps that are supported by our mathematical insights and should not be avoided. We summarize these recommendations in Inset~\ref{inset:recs}. 

We also motivate and elaborate on each step below.

\begin{inset}
    \centering
    \mdfdefinestyle{default}{}
    \begin{mdframed}[style=default, backgroundcolor=gray!15,
    frametitle={Fighting bias with DRO: practical recommendations}]
    \begin{enumerate}
    \item Identify subpopulations $P_k$ at risk in the available data.
    \item For \emph{each subpopulation, and in isolation}, determine the best performance $r_{P_k}^*$ that can be achieved with the machine learning model of choice.
    \item Decide whether the $r_{P_k}^*$ represent an acceptable set of performances. \emph{There is no point using DRO if this is not the case}. Instead, investigate why the model performs so poorly on the adverse distributions (insufficient data, inadequate model, etc.) until obtaining an acceptable set of $r_{P_k}^*$.
    \item Use DRO to construct a single machine learning system whose performance on each subpopulation is not much worse than $r_{P_k}^*$.
    This can be achieved by using the $r_{P_k}^*$ as calibration coefficients in a Lagrangian algorithm.
    \item Deploy the system on an experimental basis in order to collect more data. Sample the examples with the lowest accuracy in order to determine whether we missed a subpopulation at risk. If one is found, add the vulnerable subpopulation to the initial data and repeat all the steps.
    \end{enumerate}
    \medskip
    \end{mdframed} 
    \vspace{-1ex}
    \caption{Summary of practical recommendations.}
    \label{inset:recs}
    \vspace*{-2.4ex}
\end{inset}


The \emph{identification of the subpopulations} of concern frames the problem because it also defines the success criterion, that is, bias mitigation with respect to meaningful subpopulations. Key factors to consider are future users of the system, information on which groups have previously suffered from discrimination in similar scenarios, and the quantity and quality of the available data at the training time. In particular, we must at least have enough data to evaluate the subpopulation performances reliably. For instance, in a face recognition system, subpopulations might contain images of people representing distinct ethnicities~\citep{DBLP:journals/tifs/KlareBKBJ12}.  

Working \emph{on each subpopulation in isolation} attempts to determine the best achievable performance on each subpopulation if this subpopulation were the only target. Data available for minority subpopulations might be limited. In such case, data from remaining subpopulations can be used as a regularizer to improve performance on the subpopulation $P$ of interest. For instance, we can train on a mixture of data coming from both the subpopulation $P$ (with weight $1$) and the remaining subpopulations (with weights $\alpha_P$). We then treat $\alpha_P$ as a hyperparameter that we tune to achieve the best validation performance on data from the subpopulation $P$. Our estimate of $r^*_P$ is then the performance of the resulting system, either measured on the validation set, or on held out data if such data is available in sufficient quantity. This is why it is important to have sufficient data to reliably validate a model performance on each subpopulation. Techniques proposed to tackle noisy datasets and scenarios with limited labelled examples (active learning~\citep{ren2020survey}, transfer learning~\citep{DBLP:journals/tkde/PanY10,DBLP:journals/corr/abs-1808-01974}) can be used to increase the performance.

We can then judge whether the $r^*_P$ represent an \emph{acceptable set of performances} for a final system. As explained in Section~\ref{sec:calibration}, no DRO solution can perform better on a subpopulation $P$ than a model trained for this subpopulation $P$ only.
If the set of performances obtained in the previous steps is not acceptable, we must identify the \emph{root cause} of this problem. For instance, if poor performance stems from insufficient data quality for the subpopulation, this problem will persist at the step of finding a consistent system using DRO. We need to then focus on improving data quality for vulnerable subpopulations. We recommend investigating the root cause of insufficient performance for each of the vulnerable subpopulations in isolation.  

If minimum cost that can be achieved per each subpopulation is acceptable, we can then build a system that works consistently well across the subpopulations using DRO. In the simplest case, calibration coefficients $r_P$ per each subpopulation are going to be equal to the optimum expected risk for that subpopulation alone, $r_P = \min_{w} C_P(w)$. We can also adjust the calibration coefficients to prevent overfitting to individual subpopulations~\citep{sagawa-2020}. For $n$ examples in a certain subpopulation $P$, the expected risk $C_P(w)$ can be replaced by its empirical estimate $C_{P_n}(w)+$ augmented with a calibration constant that decreases when the number $n$ of training examples increases. Moreover, the model size often needs to be larger than the model size that achieves the best performance on each individual subpopulations. Intuitively, this is needed because handling all subpopulations at once might be more demanding than handling only one. In Section~\ref{sec:mixture} and Section A.4. in the Appendix, we also argue that overparametrization improves the issues associated with DRO local minima that are stationary points of an expected loss mixture but are not local minima of this mixture. As a result, overparametrization helps practical Lagragian DRO algorithms to find a good solution.

Finally, we must remain aware that the final system critically depends on the initial selection of the subpopulations of interest. Therefore, it remains essential to cautiously deploy such a system and to \emph{monitor} its performance during the ramp up. In particular, the worst performing cases should be examined for consistent patterns that might indicate that a vulnerable subpopulation was not considered in the problem specification. When this is the case, the correct solution is to include the initially omitted subpopulation and start again.


\section{DRO for adversarial examples}
\label{sec:adversarial}

The previous sections make several observations about the application of DRO to fight bias in machine learning systems.
In particular, we have argued that DRO is practically equivalent to training on a well chosen example distribution, and we have also shown that this well chosen example distribution is far from universal but depends on often implicit assumptions hidden in the DRO problem statement, such as calibration coefficients. 

These observations extend beyond the bias fighting scenario. For instance, DRO is often presented as a good way to construct systems that are robust to adversarial examples~\citep{szegedy-2014, madry2017towards}. This application of DRO can be formalized by considering a set $\Phi$ of all measurable functions $\varphi$ that map an example pattern $z$ to another pattern $\varphi(z)$
that is assumed to be \emph{visually indistinguishable} from $z$
according to a predefined criterion. For instance, it is common to consider the set of all transformations $\varphi$ such that $\|z-\varphi(z)\|_p\leq{\kappa}$, that is, transformations that can only modify an input pattern while remaining in a given $L_p$ ball.

Let $P_\varphi$ represent the distribution followed by $\varphi(z)$ when $z$ follows the distribution $P$. Robust solutions against the class of adversarial perturbation~$\Phi$ can be expressed as the DRO problem
\[
   \min_w \left\{ ~~~
    \max_{\varphi\in\Phi} C_{P_\varphi}(w) 
    ~~ = ~~
    \max_{P_\varphi\in\Q} C_{P_\varphi}(w) 
    ~~~ \right\} ~.
\]
The distribution family $\Q=\{P_\varphi:\varphi\in\Phi\}$ is typically much larger than the ones considered in the bias fighting scenario. Instead of representing a finite number of subpopulations, the family $\Q$ is usually infinite and  uncountable. Therefore one cannot reduce this DRO problem to a finitely constrained optimization problem and one cannot use a separation lemma because the family $\Q$ or its convex hull may not be topologically closed. 

Theorem~\ref{th:two} relies on an additional \emph{tightness} assumption\footnote{It is relatively easy to construct a sequence of finite mixtures whose gradient in $w^*$ tends to zero. The tightness assumption is useful to show that there exists a distribution whose gradient in $w^*$ is exactly zero.} to establish that a DRO local minimum $w^*$ is also a stationary point of the expected risk for an example distribution that belongs to the weak closure of the convex hull of $\Q$. The tightness assumption is trivially satisfied when the examples belong to a bounded domain (as is the case for images) or when they remain close to reference images drawn from a single distribution (as is the case for adversarial examples).

At first sight, this result seems to imply that there is a distribution of images on which the ordinary training procedure yields a solution robust to adversarial examples. Is it true that we would not have adversarial example issues if only we had the right examples to start with?  Or the right data augmentation scheme? 

More precisely, Theorem~\ref{th:two} states that a DRO local minimum is a stationary point of the expected risk for an example distribution that depends on all the details of the DRO problem and in particular on the definition of the set $\Phi$ of adversarial perturbation, or equivalently on which images are considered visually indistinguishable from a reference image. 
On one hand, we could use DRO with a class of adversarial perturbations $\Phi$ that are very conservatively below the threshold of visual distinguishability. For instance, the perturbation might be limited to changing pixel values by no more than a small threshold. Alas, the solution might be fooled by adversarial examples that do not satisfy this strict condition but nevertheless are still visually indistinguishable from the original pattern. On the other hand, we could use DRO with much broader class of perturbation, potentially including some that would affect a human observer. For instance, dithering patterns might occasionally introduce enough noise to be perceptually meaningful. Because such perturbations can dominate the DRO problem, it becomes necessary to introduce calibration constants in order to account for the variation in performance that can be justifiably expected with such perturbations.

Because DRO is fundamentally related to minimizing the expected cost for a well crafted example distribution, DRO does not really solve the original problem but displaces it into the specification of the class of adversarial perturbations and the selection of the associated cost calibration constants. However, the adversarial example scenario is substantially more challenging than the bias fighting scenario: because the number of potential perturbations is much larger than the number of potentially vulnerable subpopulations, we cannot work around the problem by first working on each of them in isolation as suggested in Section~\ref{sec:recommendations}. Using DRO for adversarial robustness without a reliable perceptual distance might be fundamentally flawed \citep{sharif-2018}.

\section{Related work}
Finding the appropriate choice for adversarial risk and making it match some notion of perceptual similarity is a topic of ongoing research. Early work on this topic considered $L_p$ norms as a similarity metric, for example $L_0$~\citep{DBLP:journals/corr/PapernotMJFCS15}, $L_2$~\citep{szegedy-2014}, or $L_\infty$~\citep{goodfellow2014explaining}. Sharif et al.~\citep{sharif-2018} show how these norms, as well as SSIM~\citep{journals/tip/WangBSS04}, fail to model perceptual similarity and still get fooled by simple adversarial examples. The view of adversarial machine learning through the lens of DRO is shared by Sinha et al.~\citep{DBLP:conf/iclr/SinhaND18}, who use Wasserstein distance as a measure of perceptual similarity and achieve important statistical guarantees regarding the computed solution, as well as excellent practical performance.

Rahimian and Mehrotra~\citep{rahimian2019distributionally} survey recent research in DRO, and in particular mention the various ways risk can be defined. To the best of our knowledge, the question of what the calibration coefficients should be has not been the topic of much investigation. Meinshausen et al.~\citep{meinshausen2015maximin} propose setting $r_P = \Var[Y_P]$ in order to maximize the minimum explained variance across distributions. Our suggestion is based on accounting for the acceptable performance (the best obtainable performance in isolation) of a particular subpopulation.

One thread in the debate on the source of bias was inspired by the outcomes of applying a photo upsampling algorithm~\citep{menon2020pulse} to images of non-white people. Examples of using DRO to approach similar problems include text autocomplete tasks~\citep{hashimoto2018fairness}, noisy minority subpopulations~\citep{DBLP:journals/corr/abs-2002-09343} and protection with respect to specific sensitive attributes~\citep{taskesen2020distributionally}, as well as lexical similarity and recidivism prediction~\citep{duchi2020distributionally}. The phenomeonon of neural networks exploiting 'shortcuts' in data~\citep{DBLP:journals/corr/abs-2004-07780} is a related line of work on robustness and fairness.

\section{Conclusion}
\label{sec:conclusion}
Whether fighting bias in machine learning systems is a data curation problem or an algorithmic problem has been the object of much discussion. Our theoretical results clarify the relation between a well known algorithmic approach, DRO, and the optimization of the expected cost on a well crafted data distribution. Contrary to the usual convex duality results, these results hold for nonconvex costs and for infinite families of distributions. These results also provide some support for the common practice of leveraging
this quasi-equivalence to design efficient DRO algorithms. But it also becomes clear that running such an imperfect DRO algorithms is equivalent to optimizing the expected risk for a well crafted distribution.

This analysis also makes clear that this well crafted distribution is not universal but depends on often implicit details of the DRO problem setup such as calibration constants. Alas, in the same way that there is no universally robust training set, there is no universal way to define calibration constants that ensure an acceptable set of results. However, an elementary argument shows that one cannot reach acceptable results with DRO unless one can reach acceptable results on each subpopulation in isolation. This forms the basis for a minimal set of practical recommendations. Finally we discuss how our insights ---this time with an infinite distribution family--- raise concerns about the commonly advocated use of DRO to tame adversarial examples without a reliable perceptual similarity criterion. 

Using DRO for fairness or adversarial robustness without a clear understanding of its algorithmic limitations can have a negative societal impact. Recommendations in Section~\ref{sec:recommendations} aim to prevent misuses of DRO, such as lowering performances on the remaining subpopulations to match the error on the most difficult distribution. However, as a consequence of Theorems~\ref{th:one}, \ref{th:two} and \ref{th:converse}, it is also necessary to address the underlying problems in the most challenging distribution. On on hand, failure to address the issues in the minority subpopulation leaves it susceptible to discrimination, both in the application at hand and in the future applications, where the unresolved issues might persist. On the other hand, reducing the performance of the majority populations can lead to an unacceptable average performance, and as a result, the system is not going to be used --- which might lead to a loss of interest in designing broadly accessible systems for this purpose (i.e., voice assistants robust to minority accents). We hope that our results and discussion will give more context to the debate on the sources of bias in machine learning, as well as help in bias mitigation in real-life scenarios.

\begin{ack}
We would like to thank Ferenc Huszár for valuable early feedback on this work.
%
\end{ack}

\bibliographystyle{plainnat}
\bibliography{neurips_2021}

\newpage
\appendix

\section{Theoretical Appendix}
\setcounter{theorem}{0}
\setcounter{definition}{0}
\setcounter{lemma}{0}

\paragraph{Notation}
Let $\ell(z,w)$ be the loss of a machine learning model where $w\in\R^d$ represent the parameters of the model and $z\in\R^n$ belongs to the space of examples. For instance, in the case of least square regression, the examples $z$ are pairs $(x,y)$ and the loss is $\ell(z,w)=\|y-f_w(x)\|^2$.

\paragraph{Distribution Robust Optimization (DRO)}
Instead of assuming the existence of a probability distribution $P(z)$ over the examples $z$ and formulating an Expected Risk Minimization (ERM) problem: 
\begin{equation}
    \label{eq:risk}
    \min_w ~~ \left\{ ~ C_P(w) = E_{z\sim{P}} [\ell(z,w)] ~ \right\},
\end{equation}
the Distribution Robust Optimization (DRO) problem considers a
family $\Q$ of distributions and seeks to minimize
\begin{equation}
\label{eq:dro}
    \min_w ~~ \left\{\quad C_\Q(w) 
      ~\stackrel{\Delta}{=}~ \max_{P\in\Q} C_P(w)
      ~=~ \max_{P\in\Q} E_{z\sim{P}} [\ell(z,w)] 
      \quad\right\}.
\end{equation}
Many authors define $\Q$ with the purpose of constructing a learning algorithm with additional robustness properties. For instance, $\Q$ may be the set of all distributions located within a certain distance of the training distribution \citep{bagnell-2005,namkoong-duchy-2016,blanchet-2019,staib-jegelka-2019}. Different ways to measure this distance lead to different and sometimes surprising solutions \citep[\eg.,][]{hu-2018}.  Interesting theoretical possibilities appear when $\Q$ also contains the discrete distributions that represent finite training sets.  Besides these theoretically justified choices of $\Q$, many practical concerns can be viewed through the prism of DRO on ad-hoc families $\Q$ of distributions. 

\begin{example}[Fighting bias]
\label{ex:bias}
Let the example distributions $P_1$ to $P_K$ represent identified subpopulations for which we want to ensure consistent performance. It is appealing to formulate this problem as a DRO problem with \mbox{$\Q=\{P_1\dots P_K\}$}.
However, as discussed in the main text, it is important
to realize that much of the original problem is hiding
behind the choice of the calibration constants.
\end{example}

\begin{example}[Fighting adversarial attacks]
\label{ex:adversarial}
\citet{szegedy-2014} have shown that one can almost arbitrarily change the output of a deep learning vision system by modifying the patterns in nearly invisible ways. Let $\Phi$ be the set of all measurable functions $\varphi$ that map an example pattern $z$ to another pattern $\varphi(z)$ that is assumed visually indistinguishable from $z$ according to a certain psycho-visual criterion. Let $P_\varphi$ represent the distribution followed by $\varphi(z)$ when $z$ follows the distribution $P$. Robust solutions against the class of adversarial perturbation~$\Phi$ can be found with DRO with the distribution family \mbox{$\Q=\{P_\varphi:\varphi\in\Phi\}$}.
\end{example}

\paragraph{Calibrated costs}
The simple DRO formulation makes sense when we know that all distributions define problems of comparable difficulty. It is however easy to imagine that a particular distribution emphasises harder examples.
We can introduce calibration terms $r_P$ in the DRO formulation to prevent any single distribution $P$ to dominate the maximum.
\begin{equation}
    \label{eq:drocal}
        \min_w ~~ \left\{\quad C_{\Q,r}(w) 
      ~\stackrel{\Delta}{=}~ \max_{P\in\Q} 
          \{ C_P(w) - r_P \}
        \quad\right\}.
\end{equation}
Correctly setting the calibration terms is both difficult and application specific. A simple but costly approach consists in letting $r_P$ be equal to the optimum cost for that distribution alone, $r_P = \min_{w} C_P(w)$.
Calibrated DRO \eqref{eq:drocal} then controls the loss of performance incurred by seeking a solution that works for all distributions as opposed to solutions that are specific to each distribution. Another approach \citep{meinshausen-buhlman-2015} relies instead on the variance of the predicted quantity.

Calibration terms can also be used to counter the effect of finite training data. For instance, when we only have $n$ examples for a certain distribution $P\in\Q$, the expected risk $C_P(w)$ can be replaced by its empirical estimate $C_{P_n}(w)+$ augmented with a calibration constant that decreases when the number $n$ of training examples increases~\citep{sagawa-2020}.

\subsection{A local minimum of a DRO problem is 
            a stationary point of an expected loss mixture}

\subsubsection{Finite case}
We first address the case where $\Q$ is a
finite set of distributions $P_1\dots{P_K}$.
The following result simplifies
Proposition~2 of \citet{arjovsky-2019}
by eliminating the KKT constraint qualification 
requirement. In the rest of this document, we always assume 
that the mixture coefficients $\lambda_k$ are nonnegative and sum to one.

\begin{xreftheorem}{th:one}
Let $\Q=\{P_1,\dots,P_K\}$ be a finite set of probability distributions on~$\R^n$ and let $w^*$ be a local minimum of the DRO problem~\eqref{eq:dro} or the calibrated DRO problem \eqref{eq:drocal}. Let the costs $C_P(w)=\E_{z\sim P}[\ell(z,w)]$ be differentiable in $w$ for all $P\in\Q$. Then there exists a mixture distribution $\Pmix=\sum_{k}\lambda_k P_k$ such that $\nabla{C}_{\Pmix}(w^*)=0$.
\end{xreftheorem}

\noindent
The proof relies on a simple hyperplane separation lemma closely related to Farkas' lemma~\citep[Sec.2.5 and Ex.2.20]{boyd-vandenberghe-2004}.

\begin{lemma}
\label{th:farkas}
A nonempty closed convex subset $A$ of $\R^n$ either contains the origin or is strictly separated from the origin by a certain hyperplane, that is, there exists a vector $u\in\R^n$ and a scalar $c>0$ such that, for all $x\in{A}$, $\dotp{u}{x}\geq{c}$.
\end{lemma}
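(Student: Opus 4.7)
The plan is to prove the lemma by a standard projection-onto-a-closed-convex-set argument. If $A$ contains the origin we are done, so assume $0 \notin A$ and construct the required separator explicitly. First I would show that the distance from the origin to $A$ is attained at a point $x^* \in A$. Since $A$ is nonempty, pick any $x_0 \in A$ and set $R = \|x_0\|$. Let $d = \inf_{x \in A}\|x\|$; then $d$ is also the infimum of $\|\cdot\|$ on the intersection $A \cap \bar{B}(0,R)$, which is nonempty (it contains $x_0$), closed (as the intersection of two closed sets), and bounded, hence compact in $\R^n$. Since the norm is continuous, it attains its minimum on this compact set, yielding $x^* \in A$ with $\|x^*\| = d$. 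Because $0 \notin A$ and $A$ is closed, we have $d > 0$.

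Next I would use convexity of $A$ to show that $x^*$ separates the origin from $A$ via the inner product. For any $x \in A$ and $t \in [0,1]$, the point $x^* + t(x - x^*)$ lies in $A$ by convexity, so by minimality of $x^*$,
\[
    \|x^* + t(x - x^*)\|^2 \;\geq\; \|x^*\|^2.
\]
Expanding and cancelling gives
\[
    2t \dotp{x^*}{x - x^*} + t^2 \|x - x^*\|^2 \;\geq\; 0.
\]
Dividing by $t > 0$ and letting $t \to 0^+$ yields $\dotp{x^*}{x - x^*} \geq 0$, i.e.\ $\dotp{x^*}{x} \geq \|x^*\|^2 = d^2$ for every $x \in A$. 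Thus setting $u = x^*$ and $c = d^2$ gives a vector and a strictly positive scalar such that $\dotp{u}{x} \geq c$ for all $x \in A$, which is exactly the required strict separation.

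The only non-routine point is the attainment of the infimum, and the main obstacle is that $A$ need not be bounded, so one cannot directly invoke compactness on $A$ itself. The workaround above, intersecting $A$ with a closed ball whose radius is the norm of any chosen point of $A$, handles this cleanly and uses only closedness of $A$ together with the Heine--Borel theorem in $\R^n$. Everything else is a short convexity computation, so the proof should fit in a few lines once the projection point $x^*$ is in hand.
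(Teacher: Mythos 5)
Your proposal is correct and follows essentially the same route as the paper: project the origin onto $A$ and use the convexity inequality $\|x^*+t(x-x^*)\|^2\geq\|x^*\|^2$ to conclude $\dotp{x^*}{x}\geq\|x^*\|^2>0$. The only difference is that you justify the existence of the minimizer explicitly (intersection with a ball plus Heine--Borel), a step the paper takes for granted by invoking the projection onto a closed convex set.
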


\begin{proof}
Assume $0\notin{A}$. Let $u\in{A}$ be the projection of the origin onto the closed convex set $A$. For all $x\in{A}$ and all $t\in[0,1]$, the point $u+t(x-u)$ also belongs to the convex set $A$. Since $u$ is the point of $A$ closest to the origin,
\[
   \forall t\in[0,1] ~~
   r(t)=\|u+t(x-u)\|^2 
    = \|u\|^2 +2t\dotp{u}{x-u}+t^2\|x-u\|^2
    \geq \|u|^2.
\]
Therefore $r'(0)=2\dotp{u}{x-u}\geq0$, that is, $\dotp{u}{x}\geq\dotp{u}{u}>0$.
\end{proof}

\begin{proof}[Proof of Theorem~\ref{th:one}]
Let $A\subset\R^n$ be the convex hull of the  $g_k{=}\nabla{C}_{P_k}(w^*)$ for $k=1\dots{K}$. $A$ is closed and convex. If $A$ does not contain the origin, according to the lemma, there exist $u$ and $c$ such that $\forall~x\in{A}$, $\dotp{u}{x}\geq{c}>0$. Therefore, for all $t>0$, moving from $w^*$ to $w^*-tu$ reduces all costs $C_{P_k}$ by at least $tc+o(t)$. As a consequence, $\max_k C_{P_k}$ is also reduced by at least $tc+o(t)$, contradicting the assumption that $w^*$ is a local minimum. 
Hence $A$ contains the origin, which means that there are positive mixture coefficients $\lambda_k$ summing to one such that $\sum_k\lambda_k\nabla{C}_{P_k}(w)= \nabla_w C_\Pmix(w) = 0.$
\end{proof}

All local and global solutions of the DRO problem~\eqref{eq:dro} or~\eqref{eq:drocal} are therefore stationary points of the expected risk~\eqref{eq:risk} associated with a mixture of the distributions of $\Q$. 
The exact mixture coefficients depend on the loss functions,
the distributions included in $\Q$ and, in the case of the calibrated version of DRO, on the calibration constants $r_P$. 

This result raises several important questions. Is this result valid when $\Q$ is not finite? Are these stationary points always local minima? Is the converse true? What is the relation between the mixture coefficients $\lambda_k$ and the calibration constants $r_P$? How far can such results go
without assuming convex losses? These questions will be
addressed in the rest of this document.

\subsubsection{Infinite case}

The infinite case differs because the convex hull of an infinite set of vectors is not necessarily closed, even when the original set is closed. Therefore cannot directly apply the lemma to the convex hull $A$ of the gradients $g_P=\nabla{C}_P(w^*)$ for all $P\in\Q$. Appling instead the lemma to the closure $\bar{A}$ of $A$ yields a substantially weaker result: if $w^*$ is a local DRO minimum, then for each $\varepsilon>0$, there is a mixture ${\Pmixeps}$ of distributions from~$\Q$ such that
\mbox{$\|\nabla{C}_{\smash{\Pmixeps}}\|\le\varepsilon$}. 

Note there is no guarantee that ${\Pmixeps}$ converges to an actual distribution when $\varepsilon$ converges to zero.\footnote{Suppose for instance that $\Q$ contains all Gaussians with unit variance with arbitrary means in $\mathbb{R}$. For any $t>0$, let ${P_{\mathrm{mix}}^{(1/t)}}$ be the equal mixture of $t^2$ equally spaced Gaussians in interval $[-t,+t]$. Neither this sequence not any of its subsequences converge to a distribution because there is no such thing as a uniform distribution on all of $\mathbb{R}$.} Therefore this weaker result does not help relating the solution of a DRO problem with the solutions of an ERM problem for a suitable training distribution. However such a stronger result
can be obtained at the price of a \emph{tightness} assumption~\citep{billingsley-1999}.
\begin{definition}
\label{th:tight}
A family of distributions $\Q$ on a Polish space\footnote{For our purposes, it is sufficient to know that $\R^n$ is a Polish space!} $\Omega$ is \emph{tight} when, for any $\epsilon>0$, there is a compact subset $K\subset\Omega$ such that $\forall P\in\Q$, $P(K)\geq1-\epsilon$.
\end{definition}
\noindent
Tightness is therefore obvious when all the examples belong to a bounded domain. Even when this is not the case, it is known that any finite set of probability distributions on a Polish space is~tight~\citep{billingsley-1999}. This often provides the means to prove the tightness of an infinite family $\Q$ of distributions that are "close" enough to a single distribution such as the training data distribution. For instance, in the case of adversarial examples (Example~\ref{ex:adversarial}), tightness is doubly obvious, first because all images belong to a bounded domain, second because the visual similarity criterion ensures that the distance between $z$ and $\varphi(z)$ is bounded.
 
\begin{xreftheorem}{th:two}
Let $\Q$ be a tight family of probability distributions on $\R^n$. Let $w^*$ be a local minimum of problem~\eqref{eq:drocal}. Let $\Qmix$ be the weak convergence closure of the convex hull of $\Q$. Let there be a bounded continuous function $h(z,w)$ defined on a neighborhood $\mathcal{V}$ of $w^*$ such that $\nabla{C}_P(w)=\E_{z\sim{P}}[h(z,w)]$ for all $P\in\Qmix$ and such that $\|h(z,w)-h(z,w')\|\leq{M}\|w-w'\|$ for almost all $z\in\R^n$. Then $\Qmix$ contains a distribution $\Pmix$ such that $\nabla_w C_\Pmix(w^*)=0$.
\end{xreftheorem}

Following~\cite{bottou-curtis-nocedal-2018}, the theorem does not require the loss $\ell(z,w)$ to be differentiable everywhere as long as the purported derivative $h(z,w)$ has the correct expectation. For our purposes, it must also be bounded and continuous on $\mathcal{V}$ and satisfy a Lipschitz continuity requirement. 

\begin{proof}
Let $\bar{A}$ be the closure of the convex hull of the
$g_P=\nabla C_P(w^*)$ for all $P\in\Q$. According to Lemma~\ref{th:farkas}, if $\bar{A}$ does not contain the origin, then there are $u$ and $c>0$ such that $\forall x\in{A}, \dotp{u}{x}>c$. In particular, for all $P\in\Q$, we have
$\dotp{u}{\nabla{C_P}(w^*)}>c>0$. Thanks to the Lipschitz continuity of $h(z,w)$, we have $C_P(w^*-tu)<C_P(w^*)-tc+Mt^2$ for all $P\in\Q$. Therefore for any $0<t<c/2M$ and any $P\in\Q$,
we have $C_P(w^*-tu)<C_P(w^*)-tc/2$ contradicting the assumption that $w^*$ is a local DRO miminum. Therefore $\bar{A}$ contains the origin. This means that for any $t>0$, there exists a mixture $\Pmixoot$ of distributions from $\Q$ such that $\|\nabla{C_{\smash{\Pmixeps}}}(w^*)\|<1/t$.
Note that if $\Q$ is tight, the convex hull of $\Q$ is also tight. Therefore the sequence $\Pmixoot$ is also tight,
and, by Prokhorov's theorem, contains a weakly convergent subsequence whose limit $\Pmix$ belongs to the closure $\Qmix$ of
the convex hull of $\Q$. Because $h(z,w^*)$ is continuous and bounded, the map $P\mapsto\nabla{C_P}(w^*)$ is continuous for the weak topology. Therefore $\nabla{C_\Pmix})(w^*)=0$.
\end{proof}

\subsection{A local minimum of an expected loss mixture is 
a local minimum of a calibrated DRO problem}

The following elementary result states that if $w^*$ is a local minimum of an expected cost mixture $C_\Pmix$, then it also is a local minimum of the calibrated DRO problem~\eqref{eq:drocal} with calibration constants $r_P$ equal to the costs $C_P(w^*)$.

\begin{theorem}[Converse]
\label{th:converseappendix}
Let $\Pmix=\sum_k\lambda_kP_k$ be an
arbitrary mixture of distributions $P_k\in\Q$. If $w^*$ is a local minimum of $C_\Pmix$, then $w^*$ is a local minimum of the calibrated DRO problem~\eqref{eq:drocal} with calibration coefficients $r_P=C_P(w^*)$.
\end{theorem}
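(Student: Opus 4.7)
The plan is to exploit the very specific choice of calibration coefficients $r_P = C_P(w^*)$: they are chosen precisely so that at $w = w^*$ every term inside the maximum vanishes, reducing the problem to showing that on a neighborhood the DRO objective stays nonnegative.

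First I would evaluate the calibrated DRO objective at $w^*$. Since $r_P = C_P(w^*)$ for every $P \in \Q$, we have $C_P(w^*) - r_P = 0$ for all $P$, hence
\[
   \max_{P \in \Q} \bigl( C_P(w^*) - r_P \bigr) \;=\; 0.
\]
So the goal reduces to showing that there exists a neighborhood $\mathcal{V}$ of $w^*$ such that for every $w \in \mathcal{V}$,
\[
   \max_{P \in \Q} \bigl( C_P(w) - C_P(w^*) \bigr) \;\geq\; 0.
\]

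Next I would use the hypothesis that $w^*$ is a local minimum of $C_{\Pmix}$ where $\Pmix = \sum_k \lambda_k P_k$. By linearity of expectation, $C_{\Pmix}(w) = \sum_k \lambda_k C_{P_k}(w)$, so on some neighborhood $\mathcal{V}$ we have
\[
   \sum_{k} \lambda_k \bigl( C_{P_k}(w) - C_{P_k}(w^*) \bigr) \;=\; C_{\Pmix}(w) - C_{\Pmix}(w^*) \;\geq\; 0.
\]
Since the $\lambda_k$ are nonnegative and sum to one, the left-hand side is a convex combination of the numbers $C_{P_k}(w) - C_{P_k}(w^*)$. A convex combination of real numbers cannot exceed the maximum of those numbers, so
\[
   \max_{k}\, \bigl(C_{P_k}(w) - C_{P_k}(w^*)\bigr) \;\geq\; 0.
\]
Because each $P_k$ lies in $\Q$, this maximum is bounded above by $\max_{P \in \Q}\bigl(C_P(w) - r_P\bigr)$, which therefore is also $\geq 0$ throughout $\mathcal{V}$. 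Combined with the first step, this shows $w^*$ is a local minimum of the calibrated DRO problem.

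There is essentially no hard step here: the only mild subtlety is remembering that the maximum in the DRO problem runs over all of $\Q$, not merely over the $P_k$ appearing in the mixture, but this plays in our favor (the set of distributions is larger, so the maximum is larger, which is exactly the direction we need for a lower bound). No continuity, differentiability, or convexity assumptions on $\ell$ are required, which is worth pointing out: the converse direction is completely elementary, in sharp contrast to Theorems~\ref{th:one} and~\ref{th:two}.
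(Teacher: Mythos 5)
Your proof is correct and is essentially the paper's argument in contrapositive form: the paper assumes the DRO objective strictly decreases near $w^*$ and deduces that the mixture cost decreases, while you show directly that the mixture inequality forces the maximum over $\Q$ to stay nonnegative; both rest on the same observation that a convex combination cannot exceed the maximum. No gap.
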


\begin{proof}
By contradiction, assume that $w^*$ is not a local minimum of~\eqref{eq:drocal}, that is, for all $\epsilon>0$ there exists $u$ such that $\|u\|<\epsilon$ and
$
  \max_{P\in\Q}\left\{ C_{P}(w^*+u)-r_{P} \right\}
  < \max_{P\in\Q}\left\{ C_{P}(w^*)-r_{P} \right\}
$.
Recalling our choice of $r_P$ yields $\max_{P\in\Q}\left\{ C_{P}(w^*+u)-C_{P}(w^*) \right\} < 0$. Since $C_{P}(w*+u) < C_{P}(w^*)$ for all $P\in\Q$, $C_\Pmix(w^*+u) <C_\Pmix(w^*)$, and~$w^*$ cannot be a local minimum of~$C_\Pmix$.
\end{proof}

\subsection{With and without convexity assumptions}

Note that there is a discrepancy between the statements of Theorem~\ref{th:converse} and Theorems~\ref{th:one}-\ref{th:two}. The former requires a local minimum of the expected loss mixture, whereas the latter only provides a stationary point. This distinction becomes moot if we assume that the loss functions $\ell(z,w)$ are convex in $w$. When this is the case, all stationary points are not only local minima, but global minima as well.\footnote{Convexity also provides easy means to weaken the differentiability assumption because of the existence of subgradients. One could similarly weaken the differentiability assumptions of Theorems~\ref{th:one}-\ref{th:two} by assuming instead the existence of local sub-- and super--gradients.}
Theorems~\ref{th:one} and theorem~\ref{th:converse} then 
provide an exact equivalence between finding a minimum of
the calibrated DRO problem~\eqref{eq:drocal} and finding
a minimum of an expected loss mixture.

However there is little point in providing theorems for the convex case because, at least in the finite case, it is well covered by theory of convex duality~\citep{bertsekas-2009} applied to a simple restatement of convex DRO as a convex optimization problem with an additional slack variable $L$
\[
    \min_{w,L}  L  \quad \text{s.t.} \quad
        \forall P\in\Q~~ C_P(w) - r_P - L \leq 0 ~.
\]
Convex duality also clarifies the relation between the mixture coefficients $\lambda_k$ and the calibration constants~ $r_{P_k}$. Increasing the weight of a distribution in the mixture is equivalent to reducing the corresponding calibration coefficient. This observation then leads to a plethora of saddle-point seeking algorithms such as Uzawa iterations~\citep{uzawa-58} (see Section A.4.).

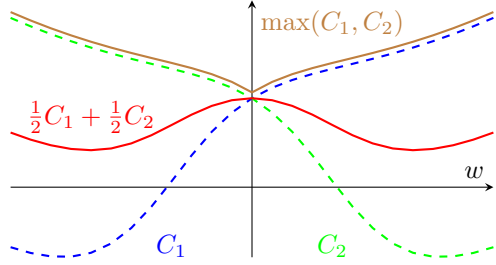
\begin{figure}[t]
    \centering
    \vspace{1ex}
    \begin{tikzpicture}
        \begin{axis}[width=8cm,height=5cm,
          axis lines=middle, 
          xmin = -3, xmax = 3, ymin = -0.6, ymax = 1.6, 
          xlabel={$w$}, ticks=none]
            \addplot[thick,blue,domain=-3:3,dashed] {
                tanh(1 + x) + (x^2 / 20)};
            \addplot[thick,green,domain=-3:3,dashed] {
                tanh(1 - x) + (x^2 / 20)};
            \addplot[thick,brown,domain=-3:3] {
                0.05 + max(tanh(1 - x) + (x^2 / 20),
                           tanh(1 + x) + (x^2 / 20) ) };  
            \addplot[thick,red,domain=-3:3] {
                (tanh(1 + x) + tanh(1 - x)) / 2 + (x^2 / 20) };  
            \node[blue] at (axis cs:-1,-0.5) {$C_1$};
            \node[green] at (axis cs:1,-0.5) {$C_2$};
            \node[brown] at (axis cs:1,1.4){
                $\max(C_1,C_2)$};
            \node[red] at (axis cs:-2,0.6){
                $\tfrac12{C_1}+\tfrac12{C_2}$};
        \end{axis}
    \end{tikzpicture}
    \caption{ \label{fig:counterexample}
    The minimum $w^*=0$ of $\max\{C_1(w),C_2(w)\}$ is a stationary point of the mixture cost $C_{\mathrm{mix}}(w) = \tfrac12C_1(w)+\tfrac12C_2(2)$. However this stationary point is not a local minimum but a local maximum of the mixture cost.}
\end{figure}

The nonconvex case is more challenging because the stationary points identified by Theorem~\ref{th:one} need not be local minima.
Consider for instance the two real functions
\[
   C_1(w) = \tanh(1+w) + \epsilon w^2
   ~~\text{and}~~
   C_2(w) = \tanh(1-w) + \epsilon w^2
\]
where the term $\epsilon w^2$ with $0<\epsilon\ll1$
is only present to ensure that each of these functions
has a well defined optimum. As shown in Figure~\ref{fig:counterexample}, their
maximum $\max\{C_1(w),C_2(w)\}$ has a a 
minimum in $w^*=0$. As predicted by Theorem~\ref{th:one}, 
this solution is a stationary point of the the mixture $C_{\mathrm{mix}}=\tfrac12C_1(w)+\tfrac12C_2(w)$.
However this stationary point is not a local minimum
but a local \emph{maximum}.

This situation is in fact easy to understand. The solution $w^*{=}0$ of the problem $\min_w\max\{C_1(w),C_2(w)\}$ (Figure~\ref{fig:counterexample}) falls in negative curvature regions of the functions $C_1$ and $C_2$. As a result any mixture of these two costs also has negative curvature in $w^*$. Therefore, the stationary point $w^*$ cannot be a local minimum. It is also easy to see that this situation cannot occur when the optimum of the DRO problem is achieved in points where the individual cost functions have positive curvature. Since all mixtures must also have positive curvature on these points, the stationary points can only be local minima. This remark is important because learning algorithms for deep learning problems tend to follow trajectories where the Hessian is very flat apart from a few positive eigenvalues~\citep{sagun-2018}. Weak negative curvature directions always exist ---even when the algorithm stops making progress--- but are very weak. Although we lack a good understanding of these landscapes, it seems a safe bet to assume that the situation presented in Figure~\ref{fig:counterexample} is often cured by overparametrization. The following section reaches a similar conclusion with closer look at a popular family of DRO algorithms.

\subsection{Lagrangian algorithms for DRO}
\label{sec:lagrangianalgo}
 
The calibrated DRO problem \eqref{eq:drocal} is easily rewritten as a constrained optimization problem by introducing a slack variable $L$:
\[
    \min_{w,L}  L  \quad \text{s.t.} \quad
        \forall P\in\Q.~~~~ C_P(w) - r_P - L \leq 0 ~.
\]
With convex loss function, finite $\Q$, and under adequate qualification conditions \citep{boyd-vandenberghe-2004,bertsekas-2009}, convex duality theory suggests to write the Lagrangian
\[
   L(w,M,\lambda_1\dots\lambda_K) = M + \sum_k \lambda_k 
     \big( C_{P_k}(w) - r_P - M \big) ~,
\]
and solve instead the dual problem,
\[
   \max_{\lambda_k\geq0} ~ \left\{
     D(\lambda_1\dots\lambda_K) \stackrel{\Delta}{=}
       \min_{w,M} L(w,M,\lambda_1\dots\lambda_K)  \right\}
\]
The solution of this problem must satisfy $\sum_k\lambda_k=1$ because the dual $D(\lambda_1\dots\lambda_k)$ is $-\infty$ when this is not the case. With this knowledge, the dual problem
becomes
\[
   \max_{\begin{array}{c}
          \scriptstyle\lambda_k\geq0\\[-0.8ex]
          \scriptstyle\sum_k\!\lambda_k=1\end{array}} 
    \left\{
    D(\lambda_1\dots\lambda_K) = 
      \left( \min_w  \sum_k\lambda_k C_{P_k}(w) \right) - \left( \sum_k \lambda_k r_{P_k}
      \right) \right\}~.
\]
The inner optimization problem is precisely the minimization of the expected risk with respect to the mixture $\sum_k\lambda_k P_k$ and therefore lends itself to many popular gradient descent methods. The mixture coefficients $\lambda_k$ must then be slowly adjusted by ascending the outer optimization objective \citep{uzawa-58}.

\IncMargin{1em} 
\setcounter{algocf}{0}
\begin{algorithm}[t]
\label{alg:lagrangianalgo}
\DontPrintSemicolon
\LinesNumbered
\SetKw{KwInput}{Input:}
\SetKw{KwOutput}{Output:}
\SetKwFunction{Descend}{Descend}
\SetKwFunction{Cost}{Cost}
\SetKwFunction{Acceptable}{Acceptable}
\SetKwFunction{ArgMax}{ArgMax}
\KwInput{Equally sized training sets $D_k$ for $k=1\dots K$}\;
\KwInput{Calibration coefficients $r_k$. 
         Initial weights $w_0$.}\;
\KwInput{Temperature $\beta$. 
         Stopping threshold $\epsilon$.}\;
\KwOutput{A sequence of weights $w_t$.}\;
\BlankLine
$t \leftarrow K$\;
$\lambda_k \leftarrow 1/t ~~\forall k$\;
\Repeat {$\max_k|\lambda_k-\delta_k| < t\epsilon$}{
    $w_{t+1} \leftarrow 
        \Descend\big(w_t, \{ D_1\star\lambda_1 \dots D_K\star\lambda_K \} \big)$ \; 
    $c_k \leftarrow 
        \Cost\big(w_{t+1}, \{ D_k \}\big)~~\forall k$ \;  
    $\delta_k \leftarrow 
        \frac{1}{Z} \exp(\beta (c_k-r_k)) ~~ \forall k$ 
        \hfill\emph{--- with $Z$ such that $\sum_k\delta_k=1$} \;
    $\lambda_k \leftarrow \frac{1}{t+1}
        (t\lambda_k + \delta_k)  ~~ \forall k$ \;
    $t \leftarrow t+1$\;
}
\Return $w_t$
\caption{\label{alg:descent}
  A typical Lagrangian DRO algorithm.}
\end{algorithm}

Algorithm~\ref{alg:lagrangianalgo} is a typical example of this strategy. Although this particular instance uses a temperature parameter $\beta$ to smooth the mixture coefficient update rule, it is also common to focus on a single term with $\beta=+\infty$. When this is the case, each outer iteration of Algorithm~\ref{alg:lagrangianalgo} merely amounts to augmenting the training set with an extra copy of the examples associated with most adverse subpopulation.

Because of their simplicity and effectiveness, such Lagrangian DRO algorithms are  widely used with deep learning system with nonconvex objectives~\citep{sagawa-2020,augustin2020adversarial}. The theoretical results discussed in this appendix provide a measure support for this practice.

A crucial assumption for this algorithm is the idea that increasing the weight of a distribution in the mixture amounts to finding a local DRO minimum with a lower calibration coefficient for that distribution. This is true in the convex case. This requires a more precise discussion in the nonconvex case. Suppose for instance that one modifies the mixture coefficients by slightly increasing $\lambda_1$ by a small $\delta>0$ and re-normalizing:
\[
    \lambda'_1 = \tfrac1Z (\lambda_1 + \delta) \qquad
    \lambda'_k = \tfrac1Z \lambda_k ~~ \forall k>1
\]
Such a change can yield two outcomes. Either $w^*$ remains a local minimum of the new expected cost mixture, or we can follow a descent trajectory and reach a new local minimum~$w'$:
\begin{equation}
\label{eq:nwnc}
    Z \sum_k \lambda'_k C_{P_k}(w')
      ~<~ Z \sum_k \lambda'_k C_{P_k}(w^*)~.
\end{equation}
\begin{itemize}
    \item[$i$)] Let us first assume that the old cost function increases when one moves from its local minimum $w^*$ to the  local minimum $w'$ of the new cost function
    \begin{equation}
    \label{eq:nwoc}
      \sum_k \lambda_k C_{P_k}(w')
      ~\geq~ \sum_k \lambda_k C_{P_k}(w^*)
    \end{equation}
    Subtracting \eqref{eq:nwoc} from \eqref{eq:nwnc} yields
    \[
    \delta C_{P_1}(w') < \delta C_{P_1}(w^*)~,  
    \]
    which, according to Theorem~\ref{th:converse}, means that the new local minimum~$w'$ is a local minimum of a DRO problem with a reduced calibration coefficient for distribution $P_1$, just as for convex losses.

    \item[$ii$)] However, it is also conceivable that \eqref{eq:nwoc} does not hold. This means that the new minimum $w'$ achieves a lower cost than $w^*$ for both the old and new mixture costs. In other words, tweaking the mixture allowed us to escape the attraction basin of the local minimum $w^*$. From the perspective of algorithm~\ref{alg:lagrangianalgo}, this disrupts the determination of the mixture coefficient, but this is nevertheless progress because both the old and new mixture costs are lower. In theory, this can only happen a finite number of times in a neural network because there is only a finite number of attraction basins. In practice, this never happens: stochastic gradient descent in neural networks usually follows a path with slowly decreasing cost without hopping from one attraction basin to another one \citep{goodfellow2014qualitatively, sagun-2018}. 
\end{itemize}
  
As mentioned earlier, it is also conceivable that $w^*$ remains a local minimum with the new mixture cost. Algorithm~\ref{alg:lagrangianalgo} then keeps increasing the weight of distribution $P_1$ as longs as the cost $C_{P_1}(w^*)=C_{P_1}(w')$ remains too high with respect to the desired calibration coefficients. This last case covers two distinct scenarios. 
\begin{itemize}
    \item[$iii$)] The Lagrangian algorithm could keep increasing the weight of the first distribution without moving away from the local minimum $w^*$. The inner loop eventually minimizes the empirical risk for the first distribution only, yet without achieving progress. This suggests that we have reached a disappointing bound on the best performance achievable with our model using training data sampled from this first distribution.
    \item[$iv)$] 
    Alternatively, the old mixture local minimum $w^*$ could stop being a local minimum of the new mixture once the first distribution weight reaches a certain threshold.  Consider for instance the problem of Figure~\ref{fig:counterexample}. Even though the DRO minimum corresponds to a local \emph{maximum} of the mixture cost $C_{\mathrm{mix}}(w)=\tfrac12C_1+\tfrac12C_2$, Theorem~\ref{th:converse} tells us that both minima of  this mixture cost are also local DRO minima, albeit for different calibration constants $r_i$. Figure~\ref{fig:recalibrate} shows the case where $r_2>r_1$. Figure~\ref{fig:recalibrateandjump} shows that increasing the weight of the second cost function beyond a certain threshold eventually erases the left minimum and causes Algorithm~\ref{alg:lagrangianalgo} to jump to the condition $r_1>r_2$. In other words, our algorithm is not able to simultaneously keep both cost functions as low as they could separately be.  This either suggests that these two goals are incompatible, or that the model does not have enough capacity to simultaneously achieve them together. As usual with neural networks, the remedy is  overparametrization\dots
\end{itemize}

\begin{figure}[t]
    \centering
    \vspace{1ex}
    \begin{tikzpicture}
        \begin{axis}[width=8cm,height=5cm,
          axis lines=middle, 
          xmin = -4, xmax = 2, ymin = -0.6, ymax = 1.2, 
          xlabel={$w$}, ticks=none]
            \addplot[thick,blue,domain=-4:2,dashed] {
                tanh(1 + x) + (x^2 / 20) + 0.55};
            \addplot[thick,green,domain=-4:2,dashed] {
                tanh(1 - x) + (x^2 / 20) - 1.2};
            \addplot[thick,brown,domain=-4:2] {
                0.05 + max(tanh(1 + x) + (x^2 / 20) + 0.55,
                           tanh(1 - x) + (x^2 / 20) - 1.2 ) };  
            \addplot[thick,red,domain=-4:2] {
                (tanh(1 - x) + tanh(1 + x)) / 2 + (x^2 / 20) };  
            \node[blue] at (axis cs:-3,-0.3) {$C_1-r_1$};
            \node[green] at (axis cs:-1,-0.5) {$C_2-r_2$};
            \node[brown] at (axis cs:-2.4,1) {
                $\max(C_1-r_1,C_2-r_2)$};
            \node[red] at (axis cs:1,0.9) {
                $\tfrac12{C_1}+\tfrac12{C_2}$};
        \end{axis}
    \end{tikzpicture}
    \caption{\label{fig:recalibrate} Both minima of $C_{\mathrm{mix}}(w)=\tfrac12C_1+\tfrac12C_2$ are solutions of a DRO problem, albeit one with different calibration constants $r_1$ and $r_2$. Here $r_2>r_1$.}
    \centering
    \vspace{1ex}
    \begin{tikzpicture}
        \begin{axis}[width=8cm,height=5cm,
          axis lines=middle, clip=false,
          xmin = -3, xmax = 3, ymin = -0.6, ymax = 1.6, 
           ticks=none]
            \addplot[thick,red,domain=-3:3.1] {
                (tanh(1 + x) + tanh(1 - x))/2 + (x^2 / 20) };  
            \addplot[thick,red!60!black,domain=-3:3.1] {
                (tanh(1 + x) + 3*tanh(1 - x))/4 + (x^2 / 20) };  
            \addplot[thick,red!40!black,domain=-3:3.1] {
                (tanh(1 + x) + 7*tanh(1 - x))/8 + (x^2 / 20) };  
            \node[red] at (axis cs:4,0.5){
                $\tfrac12{C_1}+\tfrac12{C_2}$};
            \node[red!60!black] at (axis cs:4,0){
                $\tfrac14{C_1}+\tfrac34{C_2}$};
            \node[red!40!black] at (axis cs:4,-0.33){
                $\tfrac18{C_1}+\tfrac78{C_2}$};  
        \end{axis}
    \end{tikzpicture}
    \caption{\label{fig:recalibrateandjump} Increasing the weight of the second distribution beyond a certain threshold erases the first minimum and causes Algorithm~\ref{alg:lagrangianalgo} to jump to the other minimum which is a calibrated DRO minimum for $r_1>r_2$.
    }
\end{figure}

One can derive two conclusions from this brief analysis. First, as long as we use a Lagrangian descent algorithm to solve the DRO problem, there is little point being concerned about stationary points of the mixture cost that are not local minima because (1) the algorithm is not going to find them anyway, and (2) overparametrizing the network is likely to make them disappear anyway (scenario $iv$ above). Second, the most concerning scenario is the case where a single distribution or subpopulation dominates the DRO problem because our model is unable to achieve a satisfactory performance even when it is trained to minimize the
expected cost for that distribution only. When this is the case, DRO cannot help.

\end{document}